\documentclass{article}
\usepackage{url}
\usepackage{graphicx}
\usepackage{picinpar}
\usepackage{enumitem}
\usepackage{amssymb}
\usepackage{amsmath}
\usepackage{booktabs}
\usepackage{multirow}
\usepackage[table]{xcolor}
\usepackage{wrapfig}
\usepackage{caption}

\usepackage{tabularx}
\usepackage{array}
\usepackage{colortbl}
\newtheorem{proof}{Proof}
\usepackage{algorithm}
\usepackage{algorithmic}
\newtheorem{proposition}{Proposition}

\newcolumntype{Y}{>{\centering\arraybackslash}X}

\definecolor{morandiBlue}{RGB}{222,235,234}
\usepackage[preprint]{corl_2026} 

\title{Preference-Calibrated Human-in-the-Loop Reinforcement Learning for Robotic Manipulation}

\author{
  Zeyi Liu$^{1,2}$,
  Guangyao Liu$^{3,2}$,
  Yinuo Qu$^{2}$,
  Yuquan Xue$^{2}$,
  Bofang Jia$^{2}$,\\
  \textbf{Chunhua Yang$^{1}$,}
  \textbf{Weihua Gui$^{1}$,}
  \textbf{Keke Huang$^{1,\dagger}$,}
  \textbf{Ziwei Wang$^{2,\dagger}$}
  \\
  $^1$Central South University,
  $^2$Nanyang Technological University,
  $^3$Zhejiang University\\
  \texttt{liuzeyi@csu.edu.cn, huangkeke@csu.edu.cn, ziwei.wang@ntu.edu.sg}
}

\begin{document}

\setlength{\textfloatsep}{6pt plus 1pt minus 1pt}
\setlength{\floatsep}{6pt plus 1pt minus 1pt}

\maketitle
\begingroup
\renewcommand{\thefootnote}{$\dagger$}
\footnotetext{Corresponding author.}
\endgroup

\begin{abstract}
Human-in-the-loop reinforcement learning (HIL-RL) improves sample efficiency in real-robot manipulation through online human intervention. However, successful trajectories may include suboptimal actions that deviate from the desired task-execution path and force human intervention. Existing HIL-RL methods typically apply the consistent credit assignment principle to all transitions, uniformly propagating discounted terminal rewards through suboptimal segments, ignoring the actual contribution of each transition to task success. This overestimates Q-values for critic learning and indirectly misguides actor updates toward suboptimal behavior patterns.
To this end, we propose PACT, a \textbf{P}reference-calibrated \textbf{A}ctor-\textbf{C}ritic \textbf{T}raining framework that leverages the implicit preference signals induced by intervention to perform credit reassignment on identified suboptimal segments while directly guiding policy training for unbiased critic-actor learning. 
Specifically, we first design a progress model that learns from human demonstration and identifies suboptimal segments for credit correction.
Then, from the human action and resampled policy action at the intervention state, we build preference pairs to define a counterfactual advantage that penalizes Bellman targets of the identified suboptimal segment, enabling directional credit calibration. Moreover, we directly align the policy with human corrective actions in the bounded mean space, providing an additional signal beyond critic-guided updates.
Across five real-robot manipulation tasks, PACT improves the average success rate by 24.5\% and achieves 1.3$\times$ faster convergence, improving both RL sample efficiency and performance. Code is available at \href{https://anonymous.4open.science/r/HILRL-A1X-BC05}{\textit{Code Link}}.
\end{abstract}

\keywords{Robotic Manipulation, Reinforcement Learning} 


\section{Introduction}
Deep reinforcement learning (DRL) has made substantial progress in robotic manipulation by enabling agents to acquire complex behaviors through trial-and-error interaction~\citep{deng2025survey,cui2021toward}. However, simulation-trained policies often suffer from domain-transfer failures~\citep{yang2023sim}, while real-world exploration is constrained by high interaction cost and safety risks~\citep{chen2024rlingua}. Human-in-the-loop RL (HIL-RL) addresses these challenges by incorporating real-time operator intervention, achieving effective policy learning under limited interaction budgets and substantially improving the sample efficiency of real-robot RL~\citep{luo2025precise, chen2025conrft, deng2026e2hil, luo2024serl}.

Despite recent progress, existing HIL-RL methods remain sample-inefficient and require substantial human intervention during training. We argue that this limitation partly stems from their neglect of the intrinsic heterogeneity of intervention-containing successful trajectories, where normal exploration, suboptimal segments, and human recovery coexist. In such trajectories, task success does not imply that all actions are beneficial, since some policy actions may deviate from the desired task-execution path and force human intervention. However, current methods typically apply the same credit assignment principle to all transitions, treating them as homogeneous learning samples. Under sparse terminal rewards, this uniformly propagates the final success signal through suboptimal segments, overestimating their actual contribution to task success. The resulting inflated Q-values can further misguide actor updates, since the policy is optimized to maximize the critic estimate, thereby reinforcing similar suboptimal behavior patterns and slowing convergence.

\begin{figure}[t]\centering
    \includegraphics[width=13cm]{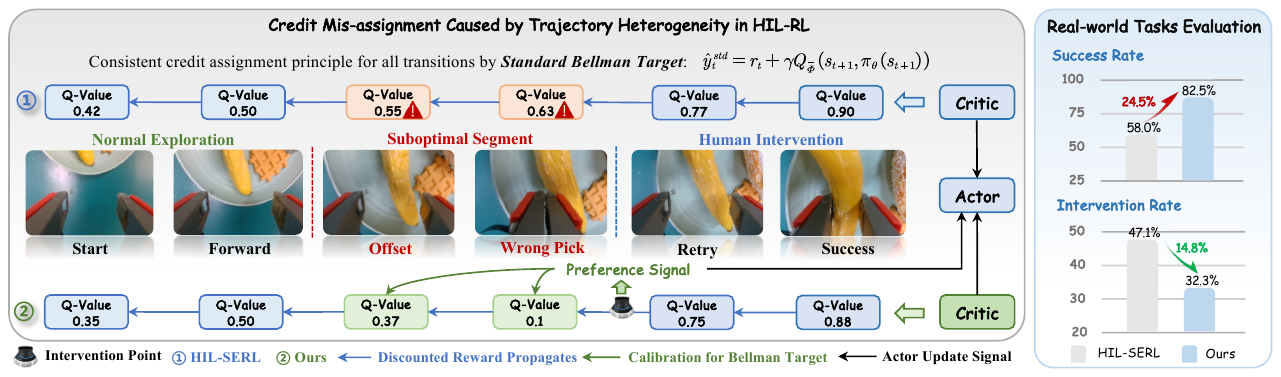}
    \caption{PACT calibrates credit assignment in intervention trajectories by localizing suboptimal segments and correcting inflated Q-values.}
    \label{framework}
\end{figure} 


To address this issue, we propose PACT, a \textbf{P}reference-calibrated \textbf{A}ctor-\textbf{C}ritic \textbf{T}raining framework for HIL-RL. Different from existing HIL-RL methods, PACT treats intervention-containing successful trajectories as heterogeneous learning samples, performs fine-grained credit calibration by identifying suboptimal action segments according to their contribution to task success, and guides policy and preference alignment. As shown in Figure~\ref{framework}, PACT constructs zero-cost preference pairs from human corrective actions and policy actions at the intervention state. These preferences are used to directionally penalize the Bellman targets of identified suboptimal segments, suppressing Q-value overestimation on the critic side, while directly guiding the actor toward human-preferred corrective actions beyond conventional critic-guided updates. Across five real-robot manipulation tasks, PACT improves the average success rate from 58.0\% to 82.5\%, reduces the intervention rate from 47.1\% to 32.3\%, and achieves 1.3$\times$ faster training than HIL-SERL, with no inference-time overhead. Our main contributions are summarized as follows:

\begin{itemize}[leftmargin=*]
    \item We propose PACT, a plug-and-play preference-calibrated actor-critic training framework that addresses credit misassignment in HIL-RL caused by trajectory heterogeneity.

    \item We design a progress model trained from human demonstrations to locate suboptimal segments for critic correction, and further build preference pairs to perform both critic and actor sides calibration, including credit reassignment and policy optimization guidance.

    \item We conduct extensive experiments on five real-robot manipulation tasks, demonstrating improved success rate, reduced human intervention, and faster convergence.
\end{itemize}


\section{Related Work}
\subsection{Human in the loop RL for Robotic Manipulation}

Real-robot RL learns policies through direct interaction with physical environments, but its sample efficiency is severely constrained by costly data collection and safety risks, especially compared with large-scale parallel simulation~\citep{zhang2021reinforcement, torne2024reconciling, ju2022transferring}. Prior work improves real-world efficiency through model-based RL~\citep{clavera2018model, lee2020guided}, hybrid approaches~\citep{zhang2021reinforcement, lei2025rl}, and priors from pretrained foundation models~\citep{zhong2024empowering, kawaharazuka2024real}, but these methods often introduce additional modeling, training, or deployment overhead. Human-in-the-loop RL offers a practical alternative by allowing operators to intervene during online exploration. Early human-robot collaborative learning builds on interactive imitation learning~\citep{kelly2019hg, ross2011reduction}, but purely imitative objectives cannot continuously improve through trial-and-error interaction. Recent methods therefore combine offline demonstrations with online RL~\citep{vecerik2017leveraging, ball2023efficient}. Among them, SERL~\citep{luo2024serl} provides a complete real-robot RL framework, and HIL-SERL~\citep{luo2025precise} further incorporates real-time intervention trajectories into replay-based training, achieving strong performance on dexterous manipulation tasks. Nevertheless, existing HIL-RL methods still optimize all transitions uniformly, overlooking the heterogeneity in intervention-containing successful trajectories and thus inducing credit misassignment. In this paper, we address this by exploiting intervention-induced preferences for credit reassignment on identified suboptimal segments.

\subsection{Q-value Correction and Preference Learning}

Q-value overestimation is a long-standing challenge in reinforcement learning. Existing methods mainly address it by improving critic estimation, such as Double DQN~\citep{van2016deep}, TD3~\citep{fujimoto2018addressing}, and CQL~\citep{kumar2020conservative}, or by reshaping Bellman learning with additional supervisory signals. The latter includes reward shaping with dense rewards~\citep{cao2024survey,memarian2021self,devidze2022exploration}, inverse RL from expert demonstrations~\citep{phan2023driveirl}, and preference-based learning from human comparisons~\citep{christiano2017deep, rafailov2023direct}. However, these methods typically rely on global trajectory-level correction, external reward models, or costly human annotation, and thus do not address segment-level credit errors inside intervention-containing trajectories. In contrast, HIL-RL naturally provides a zero-cost preference signal: at each intervention point, the human corrective action is preferred to the policy action. Based on this signal, our method performs preference-calibrated segment-level credit correction to reshape Bellman targets and further constrains actor learning for more sample-efficient policy optimization.
\section{Methodology}
In this section, we first introduce the problem of credit misassignment in HIL-RL (\ref{sec:3.1}). Then, we present a lightweight progress model used to localize suboptimal segments (\ref{sec:3.2}). Building on this, we propose a preference-aware counterfactual Q-value correction method to suppress overestimation (\ref{sec:3.3}). Finally, we introduce a preference auxiliary policy optimization that directly guides policy learning at intervention points (\ref{sec:3.4}). Our overall pipeline can be seen in Figure~\ref{method}.

\subsection{Preliminaries and Problem Statement} \label{sec:3.1}

Real-robot manipulation can be formulated as a Markov Decision Process $\mathcal{M} = (\mathcal{S}, \mathcal{A}, \mathcal{P}, r, \gamma)$, where $\mathcal{S}$ is the state space, $\mathcal{A}$ is the action space, $\mathcal{P}(s_{t+1}| s_t, a_t)$ is the transition dynamics, $r(s_t, a_t)$ is the reward function, and $\gamma \in [0, 1)$ is the discount factor. The objective is to learn a policy $\pi_\theta(a_t |s_t)$ that maximizes the expected cumulative discounted return $\mathbb{E}_{\pi_\theta}\left[\sum_{t=0}^{T} \gamma^t r(s_t, a_t)\right]$.
 
Recent HIL-RL methods improve the sample efficiency by incorporating human intervention and optimizing the policy with RLPD~\citep{ball2023efficient}. The critic is trained with the standard Bellman target:
\begin{equation}
\hat{y}_t^{std} = r_t + \gamma Q_{\bar{\phi}}(s_{t+1}, \pi_{\theta}(s_{t+1})), \quad \mathcal{L}_{\text{critic}} = \mathbb{E}_{(s_t,a_t,r_t,s_{t+1}) \sim \mathcal{B}} \left[ \left( Q_\phi(s_t, a_t) - \hat{y}_t^{std} \right)^2 \right],
\label{eq:bellman_std}
\end{equation}
where $\mathcal{B}$ is the replay buffer, and $Q_{\bar{\phi}}\triangleq \min_{i=1,2}Q_{\bar{\phi}_i}$ denotes the clipped target Q-value. The actor is updated by maximizing the critic estimate with entropy regularization:
\begin{equation}
\mathcal{L}_{\mathrm{actor}}
=
-\mathbb{E}_{s_t\sim\mathcal{B}}
\left[
Q_\phi(s_t,\pi_\theta(s_t))
+
\alpha\mathcal{H}(\pi_\theta(\cdot|s_t))
\right],
\label{eq:actor_std}
\end{equation}
where $\alpha$ is the temperature coefficient and $\mathcal{H}(\cdot)$ is the entropy loss.

However, this objective applies the same backup to all transitions. An intervention-containing successful trajectory is heterogeneous and can be written as
$\tau=\tau^{\mathrm{norm}}\circ\tau^{\mathrm{sub}}\circ\tau^{\mathrm{hum}}$, where
$\tau^{\mathrm{norm}}=\{s_t,a_t\}_{t=0}^{t_a-1}$,
$\tau^{\mathrm{sub}}=\{s_t,a_t\}_{t=t_a}^{t_h-1}$, and
$\tau^{\mathrm{hum}}=\{s_t,a_t\}_{t=t_h}^{T-1}$ denotes the normal, suboptimal, and human intervention segments, respectively. Under sparse terminal rewards, Eq.~(1) propagates the success signal through the entire trajectory, assigning undeserved credit to actions in $\tau^{\mathrm{sub}}$ and resulting in Q-values overestimation. In light of this, we aim to address this issue by representing such trajectories as structured heterogeneous data and handling them at a finer granularity during training, enabling more accurate credit assignment and more efficient policy optimization.

\begin{figure}[t]\centering
    \includegraphics[width=14cm]{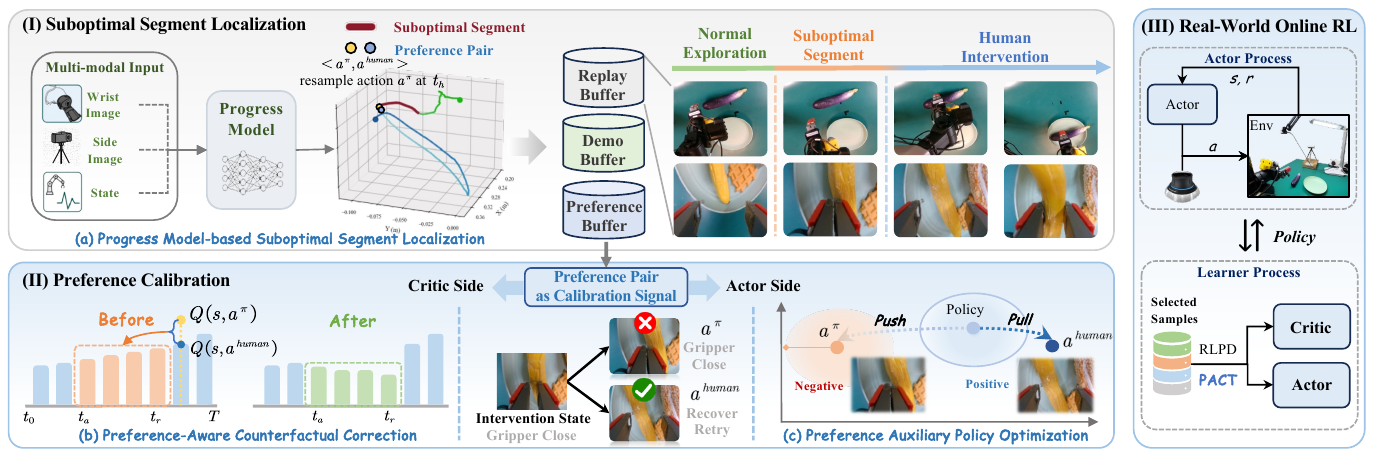}
    \caption{PACT converts intervention-containing trajectories into preference-calibrated critic and actor updates for sample-efficient real-world HIL-RL.}
    \label{method}
\end{figure} 

\subsection{Lightweight Progress Model-based Suboptimal Segment Localization} \label{sec:3.2}
To localize the suboptimal segments, we design a lightweight task progress estimator $f_\psi: \mathcal{O} \rightarrow [0, 1]$ that maps observations to a scalar progress value. It is trained in a fully self-supervised manner using only the demonstration data $\mathcal{D}_{\text{demo}}$, with \emph{zero} additional data collection or annotation cost.

\textbf{Input Representation.} We employ an encoder $g_{\text{img}}:  g_{\text{res}} \cdot h$ to encode image data and a learnable MLP $g_{\text{prop}}:\mathbb{R}^{7}\rightarrow \mathbb{R}^{d_p}$ to encode robot proprioceptive data, where $g_{res}:\mathbb{R}^{H \times W \times 3} \rightarrow \mathbb{R}^{d_v}$ is a frozen pretrained ResNet encoder, and $h: \mathbb{R}^{d_v} \rightarrow \mathbb{R}^{d_z}$ is a learnable projection. Let $I^{\text{w}}_t, I^{\text{s}}_t$ denote wrist and side camera images. The observation is constructed from residuals relative to the initial frame and proprioceptive displacement:
\begin{equation}
o_t = \left[  g_{\text{img}}(I^{\text{w}}_t - I^{\text{w}}_0),\  g_{\text{img}}(I^{\text{s}}_t -  I^{\text{s}}_0),\ g_{\text{prop}}(q_t - q_0) \right], \quad \hat{p}_t = f_\psi(o_t).
\label{eq:progress_input}
\end{equation}
The progress model $f_\psi$ then predicts a scalar value $\hat{p}_t = f_\psi(o_t) \in [0, 1]$.

 \textbf{Self-Supervised Training.} Building on the assumption that successful demonstration trajectories are near-optimal and exhibit smooth, monotonic task progress, we use uniform progress $p^*_t = t/T$ as pseudo-labels for a trajectory of $T$ steps. Furthermore, the regression loss with a discounted Monte Carlo return is introduced to strengthen the terminal supervision over the final segment $\mathcal{T}_{\text{end}}$:
\begin{equation}
\mathcal{L}_{\text{progress}} =\frac{1}{T} \sum_{t=0}^{T-1} \left( f_\psi(o_t) - \frac{t}{T} \right)^2 + \frac{\lambda_{\text{mc}}}{|\mathcal{T}_{\text{end}}|} \sum_{t \in \mathcal{T}_{\text{end}}} \left( f_\psi(o_t) - \sum_{k=0}^{T-t-1} \gamma^k r_{t+k} \right)^2.
\label{eq:loss_progress}
\end{equation}
\textbf{Suboptimal Segment Localization.} We perform progress inference on intervention-containing successful trajectories to localize suboptimal segments by localizing abnormal progress patterns before interventions. Let $\hat p_t=f_\psi(o_t)$ and $\Delta_w(t)=\hat p_{t+w}-\hat p_t$, we identify a regression anchor when the predicted progress within a short future window drops below the current progress by a margin:
\begin{equation}
t_a=\min\left\{t<t_h \mid 
\exists w\in\{1,\ldots,\min(W,t_h-t)\},\
\hat p_{t+w}<\hat p_t-\delta_{\mathrm{reg}}
\right\},
\end{equation}
where the recovery point is defined as the first timestep after $t_a$ with $K$ consecutive progress increases:
\begin{equation}
t_r=\min\left\{t>t_a \mid 
\hat p_{t+k}-\hat p_{t+k-1}>0,\ \forall k=1,\ldots,K
\right\}.
\end{equation}
If no recovery is found before intervention, we set $t_r=t_h-1$. The resulting interval $[t_a,t_r]$ is treated as a candidate suboptimal segment for subsequent critic correction.




\subsection{Preference-Aware Counterfactual Correction} \label{sec:3.3}

Given the identified suboptimal segments, PACT calibrates critic learning by converting preferences into target-level corrections. The key idea is to measure how much the current critic violates the preference that human corrective actions should be favored over suboptimal policy actions.

\textbf{Counterfactual Advantage}. Human intervention induces a local preference: at the intervention state, the corrective action $a^{\mathrm{human}}_{t_h}$ should be valued above the policy action $a^{\pi}_{t_h}$, i.e., $Q(s_{t_h},a^{\mathrm{human}}_{t_h}) > Q(s_{t_h},a^{\pi}_{t_h})$ under the desired value function. To operationalize this preference, we introduce a counterfactual advantage that measures the preference-violating value gap under the current target critic. This term provides a directional calibration signal for critic learning and determines how much over-credited value should be removed from the preceding suboptimal segment:
\begin{equation}
A_{\mathrm{cf}} =
\mathrm{sg}\left[
\max\left(
Q_{\bar{\phi}}(s_{t_h},a^\pi_{t_h})
-
Q_{\bar{\phi}}(s_{t_h},a^{\mathrm{human}}_{t_h}),
0
\right)
\right],
\label{eq:advantage}
\end{equation}
where $\mathrm{sg}[\cdot]$ denotes the stop-gradient operator, and the non-negative clipping ensures directional suppression of preference-violating critic estimates.

\textbf{Corrected Bellman Target.} Although $A_{\mathrm{cf}}$ is computed from a single intervention point, the detected interval $[t_a,t_r]$ corresponds to the policy-executed segment leading to the intervention. Therefore, we use $A_{\mathrm{cf}}$ as a segment-level correction signal to calibrate its Bellman targets.

Considering early actions near $t_a$ may still reflect exploratory behavior and have weaker causal evidence for intervention, whereas later actions are more strongly associated with the detected suboptimal execution. We introduce a position-aware exponential weight $\omega(t) = \exp\left( -\beta \cdot \frac{t_r - t}{t_r - t_a} \right),$
where $\beta$ controls how sharply the correction increases along the segment. The exponential form provides conservative correction for early uncertain actions while concentrating stronger calibration on later high-confidence suboptimal actions.

Let $\hat{y}^{\mathrm{std}}_t = r_t + \gamma Q_{\bar{\phi}}(s_{t+1}, \pi_\theta(s_{t+1}))$ denote the standard Bellman target. PACT penalize the Bellman target by subtracting the position-weighted counterfactual correction:
\begin{equation}
\hat{y}^{\mathrm{corr}}_t =
\begin{cases}
\hat{y}^{\mathrm{std}}_t - \omega(t) \cdot A_{\mathrm{cf}}, & \text{if } t \in [t_a, t_r], \\
\hat{y}^{\mathrm{std}}_t, & \text{otherwise}.
\end{cases}
\label{eq:corrected_bellman}
\end{equation}
The critic is then trained by minimizing the temporal-difference error to the corrected target:
\begin{equation}
\mathcal{L}_{\mathrm{critic}}
=
\frac{1}{2}
\sum_{i=1}^{2}
\mathbb{E}_{(s_t,a_t,r_t,s_{t+1})\sim\mathcal{B}}
\left[
\left(
Q_{\phi_i}(s_t,a_t)-\hat y_t^{\mathrm{corr}}
\right)^2
\right].
\label{eq:critic_loss}
\end{equation}

\subsection{Preference Auxiliary Policy Optimization} \label{sec:3.4}

While the corrected critic suppresses value inflation on suboptimal segments, the intervention signal can also directly guide the policy distribution learning. For each intervention state $s_{t_h}$, we obtain a zero-cost preference pair $(a^{\mathrm{human}}, a^\pi)$. A direct log-probability preference objective, as in discrete-action preference optimization~\cite{hong2024orpo}, is ill-conditioned for continuous tanh-squashed Gaussian actors. As training converges and the policy variance shrinks, log-probability gaps become dominated by the $1/\sigma_d^2$ term, leading to unstable gradients. We therefore impose the preference in the bounded mean-action space and backpropagate only through the squashed mean $\bar a_\theta(s)=\tanh(\mu_\theta(s))$:
\vspace{-0.3em}
\begin{equation}
\mathcal{L}_{\text{pref}} = E_{(s, a^{\text{human}}, a^{\pi}) \sim \mathcal{B}_{\text{pref}}} \left[ \left\| \tanh(\mu_\theta(s)) - a^{\text{human}} \right\|^2 + \frac{1}{\left\| \tanh(\mu_\theta(s)) - a^{\pi} \right\|^2+\epsilon} \right],
\label{eq:pref_loss}
\end{equation}
where the first term aligns the actor with the corrective action, the second repels it from the intervention-triggering policy action, and $\epsilon$ prevents numerical singularity. The final actor objective combines the standard RLPD actor loss with the preference-calibration term:
\begin{equation}
\mathcal{L}_{\text{actor}} = \mathcal{L}_{\text{RLPD}} + \lambda_{\text{pref}} \mathcal{L}_{\text{pref}},
\label{eq:actor_total}
\end{equation}
where $\lambda_{\mathrm{pref}}$ controls the strength of actor-side preference calibration.


\section{Experimental Results}

We evaluate PACT on real-robot manipulation tasks through four questions. \textbf{(1)}. Does PACT improve sample efficiency and performance (Section \ref{subsection-4.2})? \textbf{(2)}. Can the progress model reliably identify suboptimal segments before human interventions (Section \ref{subsection-4.3})? \textbf{(3)}. Can preference-aware counterfactual correction mitigate Q-value overestimation during training (Section \ref{subsection-4.4})? \textbf{(4)}. How does each component contribute to the policy performance improvement (Section \ref{subsection-4.5})?

\subsection{Experiment Setup} \label{subsection-4.1}

\textbf{Real Robot Task.} We evaluate PACT on five real-world manipulation tasks with increasing difficulty: \textit{Press}, \textit{Insertion}, \textit{Pick}, \textit{Pick \& Place}, and \textit{Assembly}. For each task, we collect 20 demonstration trajectories and store them in $\mathcal{D}_{demo}$ for progress model training and early-stage RL stabilization. All experiments are conducted on a \textit{Galaxea A1X} 6-DoF arm.

\textbf{Baseline and Evaluation Metrics.}
We adopt HIL-SERL~\cite{luo2025precise} as the primary baseline. It incorporates human intervention trajectories into replay-based off-policy training with the entropy-regularized RLPD objective and represents the state-of-the-art HIL-RL framework for real-world manipulation. We evaluate each method using three metrics: the final success rate (SR), the training time, and the average human intervention rate (IR). The human intervention rate is defined as the ratio between the number of intervention steps and the total robot interaction steps.

\subsection{Comparisons with the State-of-the-Art}\label{subsection-4.2}

Table~\ref{comparison_exp} reports final quantitative results for PACT and HIL-SERL, while Fig.~\ref{qualitative_analysis} shows their training dynamics in intervention and success rates across five tasks. PACT consistently improves final performance and sample efficiency, increasing the average success rate from $58.0\%$ to $82.5\%$ and reducing the average intervention rate from $47.1\%$ to $32.3\%$. The improvement is especially pronounced on \textit{Assembly}, the most complex task, where PACT reaches a $62.5\%$ success rate in 122 minutes, compared with only $10.0\%$ for HIL-SERL over 140 minutes. PACT also shortens average training time from $80.7$ to $63.0$ minutes, indicating faster convergence under the same real-robot training pipeline. The learning curves in Fig.~\ref{qualitative_analysis} further show that PACT reduces human intervention more rapidly while achieving faster success-rate growth, suggesting that preference-aware segment-level correction improves policy learning rather than merely relying on additional human recovery.
\begin{table}[t]
\centering
\tiny
\renewcommand{\arraystretch}{0.9}
\setlength{\tabcolsep}{4pt}
\newcommand{\hl}[1]{\cellcolor{morandiBlue}{#1}}
\caption{Real-world Results. Comparison of the success rate, intervention rate, and training time between PACT and HIL-SERL
across five real-world tasks}
\resizebox{0.9\linewidth}{!}{
\begin{tabular}{llcccccc}
\toprule
\textbf{Metric} & \textbf{Method} 
& \textbf{Press} 
& \textbf{Insertion} 
& \textbf{Pick} 
& \textbf{Pick \& Place} 
& \textbf{Assembly}
& \textbf{Average} \\
\midrule

\multirow{2}{*}{\textbf{Success Rate (\%) \textcolor{green!60!black}{$\uparrow$}}}
& HIL-SERL & 85.0 & 73.0 & 67.0 & 55.0 & 10.0 & 58.0 \\
& \hl{\textbf{PACT}} 
& \hl{\textbf{95.0}} 
& \hl{\textbf{90.0}} 
& \hl{\textbf{95.0}} 
& \hl{\textbf{70.0}} 
& \hl{\textbf{62.5}} 
& \hl{\textbf{82.5}} \\
\midrule

\multirow{2}{*}{\textbf{Intervention Rate (\%) \textcolor{red}{$\downarrow$}}}
& HIL-SERL & 43.6 & 45.3 & 35.8 & 54.9 & 56.0 & 47.1 \\
& \hl{\textbf{PACT}} 
& \hl{\textbf{19.1}} 
& \hl{\textbf{32.5}} 
& \hl{\textbf{26.1}} 
& \hl{\textbf{36.6}} 
& \hl{\textbf{47.2}} 
& \hl{\textbf{32.3}} \\
\midrule

\multirow{2}{*}{\textbf{Run Time (Minutes) \textcolor{red}{$\downarrow$}}}
& HIL-SERL & 23.2 & 51.8 & 75.0 & 110.8 & 142.6 & 80.7 \\
& \hl{\textbf{PACT}} 
& \hl{\textbf{21.4}} 
& \hl{\textbf{30.3}} 
& \hl{\textbf{46.6}} 
& \hl{\textbf{94.5}} 
& \hl{\textbf{122.1}} 
& \hl{\textbf{63.0}} \\
\bottomrule
\end{tabular}
}
\label{comparison_exp}
\vspace{-0.8em}
\end{table}

\begin{figure}[!t]\centering
    \includegraphics[width=13cm]{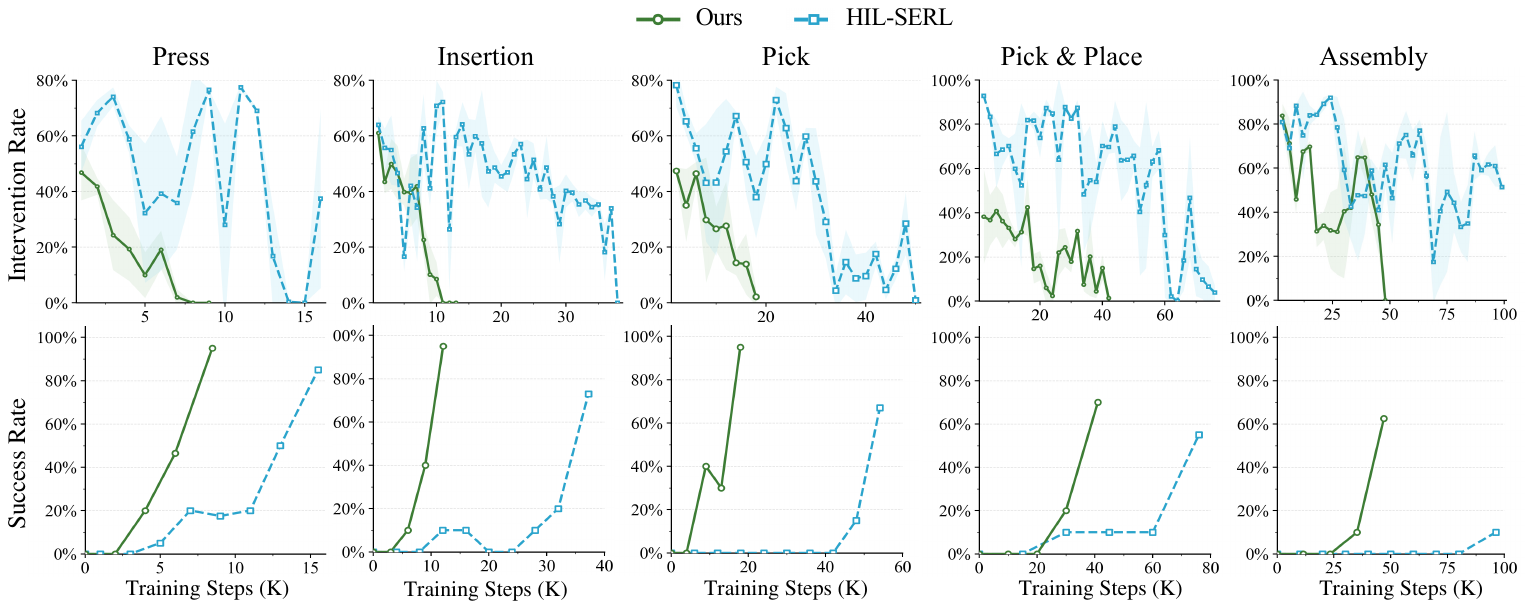}
    \caption{Real-robot training dynamics of PACT and HIL-SERL across five manipulation tasks.}
    \label{qualitative_analysis}
\end{figure} 

\vspace{-0.3em}
\subsection{Progress Model Analysis} \label{subsection-4.3}
\textbf{Qualitative Results.}
We first examine whether the progress model localizes behaviorally meaningful suboptimal segments for subsequent credit correction. Fig.~\ref{Progress Model Analysis} visualizes representative real-robot rollouts, where the end-effector trajectories are plotted in 3D space and the segments detected by the progress model are highlighted in red. These segments align with typical failure patterns, such as hesitation in \textit{Insertion}, wrong grasp in \textit{Pick}, and directional deviation in \textit{Pick \& Place}, suggesting that the identified results are closely associated with degraded execution, providing a basis for targeting segment-level credit correction at the detected suboptimal parts of intervention-containing successful trajectories.
\begin{figure}[h]\centering
    \includegraphics[width=13cm]{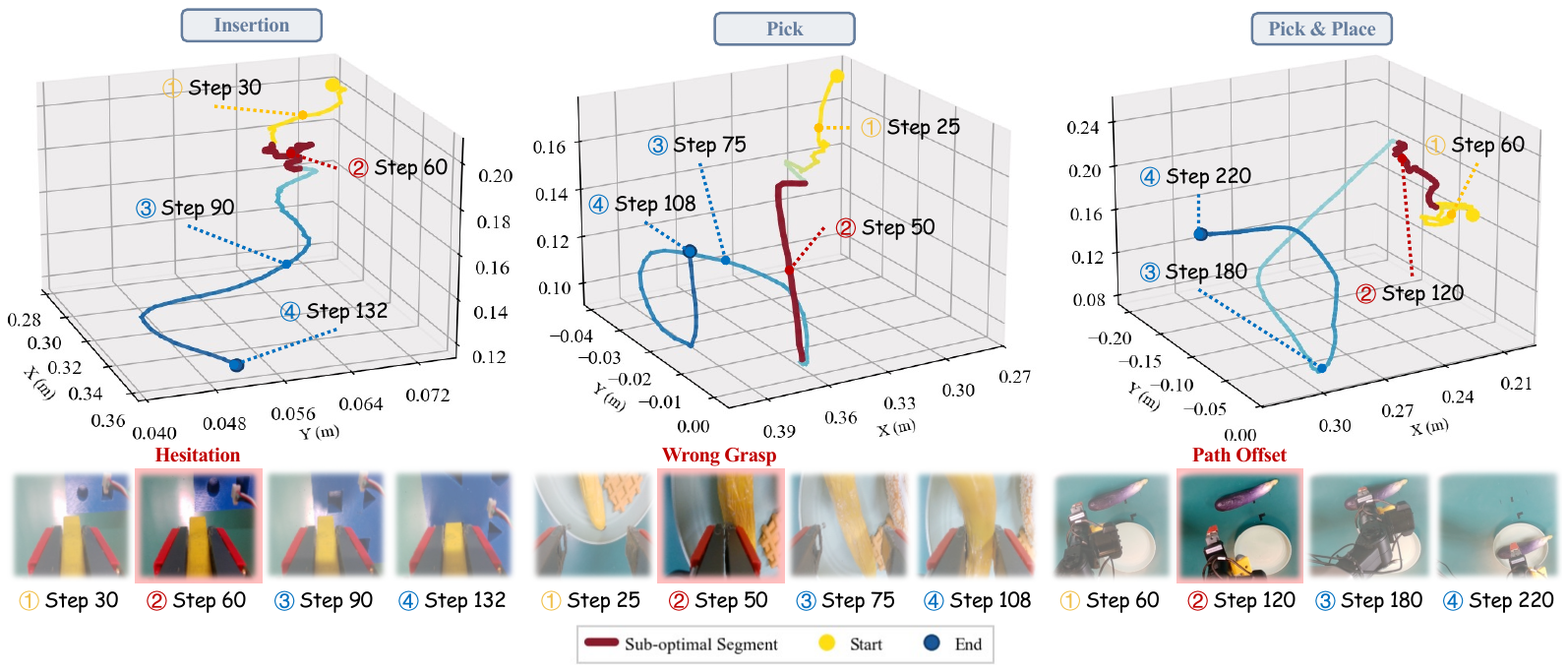}
    \caption{Progress-based suboptimal segment localization. The red segments denote identified suboptimal portions, which correspond to wrong behavior in RL rollouts.}
    \label{Progress Model Analysis}
\end{figure} 

\textbf{Quantitative Evaluation.}
Since the progress model is trained without external supervised labels, we assess its localization results using two post-hoc, model-independent kinematic metrics: path efficiency $\eta$ and direction deviation $\Delta\theta$. For a trajectory segment $[t_{\mathrm{start}}, t_{\mathrm{end}}]$, they are defined as:
\begin{equation}
\eta =
\frac{
\|\mathrm{pos}(t_{\mathrm{end}})-\mathrm{pos}(t_{\mathrm{start}})\|
}{
\sum_{t=t_{\mathrm{start}}}^{t_{\mathrm{end}}-1}
\|\mathrm{pos}(t+1)-\mathrm{pos}(t)\|
},
\quad
\Delta\theta =
\arccos
\left(
\frac{
\vec{d}_{\mathrm{seg}} \cdot \vec{d}_{\mathrm{demo}}
}{
\|\vec{d}_{\mathrm{seg}}\| \|\vec{d}_{\mathrm{demo}}\|
}
\right),
\label{eq:quality_metrics}
\end{equation}
where $\vec{d}_{\mathrm{seg}}=\mathrm{pos}(t_{\mathrm{end}})-\mathrm{pos}(t_{\mathrm{start}})$, $\vec{d}_{\mathrm{demo}}=\mathrm{pos}_{\mathrm{demo}}(t_{\mathrm{match}}+N)-\mathrm{pos}_{\mathrm{demo}}(t_{\mathrm{match}})$, and $N=t_{\mathrm{end}}-t_{\mathrm{start}}$. Here, $t_{\mathrm{match}}$ is the timestep in $\mathcal{D}_{\mathrm{demo}}$ whose end-effector position is nearest to $\mathrm{pos}(t_{\mathrm{start}})$. These metrics capture complementary and task-agnostic aspects of motion quality, where $\eta$ measures motion redundancy by comparing net displacement with accumulated path length and $\Delta\theta$ measures execution drift as the angular mismatch between the segment direction and the nearest matched demonstration direction.
As shown in Table~\ref{tab:progress_quality}, identified suboptimal segments consistently exhibit lower path efficiency and larger direction deviation, confirming that the progress model can effectively identify low-quality segments for subsequent Q-value correction.
\begin{table}[h]
\centering
\caption{
Trajectory quality comparison between normal and identified suboptimal segments.
$\eta$ denotes path efficiency, and $\Delta\theta$ denotes the direction deviation to the nearest demonstration segment.
}
\small
\label{tab:progress_quality}
\renewcommand{\arraystretch}{1.1}
\resizebox{0.9\linewidth}{!}{
\begin{tabular}{llccccc}
\toprule
\textbf{Metric} & \textbf{Segment}
& \textbf{Press}
& \textbf{Insertion}
& \textbf{Pick}
& \textbf{Pick \& Place}
& \textbf{Assembly} \\
\midrule

\multirow{2}{*}{\textbf{Path Efficiency $\eta$ \textcolor{green!60!black}{$\uparrow$}}}
& Normal
& .530 & .577 & .551 & .537 & .495 \\
& \cellcolor{cyan!8}Suboptimal
& \cellcolor{cyan!8}.521
& \cellcolor{cyan!8}.471
& \cellcolor{cyan!8}.490
& \cellcolor{cyan!8}.432
& \cellcolor{cyan!8}.473 \\

\midrule

\multirow{2}{*}{\textbf{Direction Deviation $\Delta\theta$ \textcolor{red}{$\downarrow$}}}
& Normal
& 47.1$^\circ$ & 41.7$^\circ$ & 40.4$^\circ$ & 54.8$^\circ$ & 38.8$^\circ$ \\
& \cellcolor{cyan!8}Suboptimal
& \cellcolor{cyan!8}67.4$^\circ$
& \cellcolor{cyan!8}67.4$^\circ$
& \cellcolor{cyan!8}68.3$^\circ$
& \cellcolor{cyan!8}83.1$^\circ$
& \cellcolor{cyan!8}63.7$^\circ$ \\

\bottomrule
\end{tabular}
}
\end{table}

\subsection{Q-value Correction Analysis}
\label{subsection-4.4}

We further evaluate whether preference-aware counterfactual correction suppresses Q-value overestimation during training. Since true action values are unavailable in real-robot experiments, we use the Monte Carlo return $G_t$ only as a diagnostic reference and compare the learned critics of HIL-SERL and PACT on intervention-containing successful episodes.

Fig.~\ref{critic}(a) compares the target Q-value evolution during training on the \textit{Insertion} task. HIL-SERL produces rapidly increasing target values, suggesting that uniform TD backup propagates terminal rewards to suboptimal pre-intervention segments and leads to value inflation. In contrast, PACT keeps the value estimates more conservative through preference-aware correction. Fig.~\ref{critic}(b) shows a representative case. Within the detected correction segment, HIL-SERL assigns values higher than the MC reference, whereas PACT substantially lowers the estimated values. Fig.~\ref{critic}(c) further aggregates the normalized critic bias $Q(s,a)-G_t$ over 20 episodes, where HIL-SERL exhibits consistently positive bias while PACT largely suppresses this overestimation. Table~\ref{tab:q_bias} confirms this trend across all five tasks.

\begin{figure}[h]\centering
    \includegraphics[width=13.5cm]{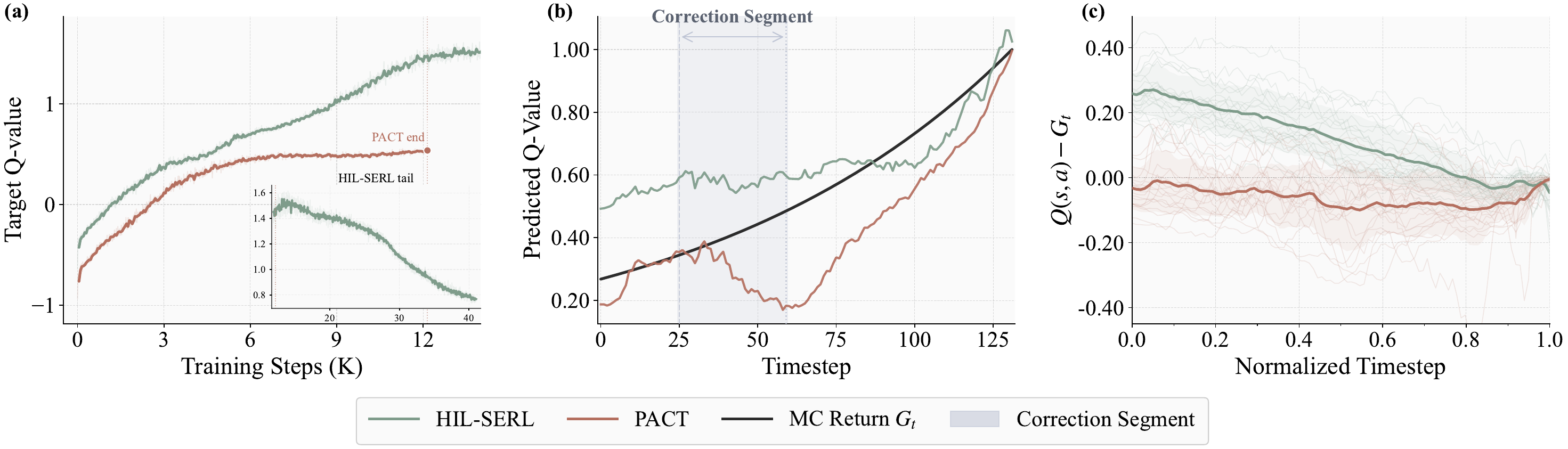}
    \caption{Q-value correction analysis. 
(a) Training stage Q-value evolution. (b) Single-trajectory value profile. (c) Normalized critic bias over intervention-containing successful episodes.}
    \label{critic}
\end{figure}

\begin{table}[t]
\centering
\caption{
Average critic bias $Q(s,a)-G_t$ on detected suboptimal segments.
Positive values indicate overestimation relative to the Monte Carlo return.
}
\label{tab:q_bias}
\small
\setlength{\tabcolsep}{3pt}
\renewcommand{\arraystretch}{1.1}
\resizebox{0.7\linewidth}{!}{
\begin{tabular}{lccccc}
\toprule
\textbf{Method} & \textbf{Press} & \textbf{Insertion} & \textbf{Pick}  & \textbf{Pick \& Place} & \textbf{Assembly} \\
\midrule
HIL-SERL & $+0.029$ & $+0.114$ & $+0.019$  & $+0.054$ & $+0.060$ \\
PACT     & $-0.042$ & $-0.061$ & $-0.107$  & $-0.146$ & $-0.152$ \\
\bottomrule
\end{tabular}}
\end{table}

\vspace{-0.8em}
\subsection{Ablation Study}
\label{subsection-4.5}

\begin{wraptable}{r}{0.42\linewidth}
\vspace{-1.0em}
\centering
\caption{Ablation study on \textit{Insertion}.}
\label{tab:ablation}
\renewcommand{\arraystretch}{1.12}
\resizebox{\linewidth}{!}{
\begin{tabular}{lcc}
\toprule
\textbf{Setting} & \textbf{SR@30min} $\uparrow$ & \textbf{IR@30min} $\downarrow$ \\
\midrule
w/o. C1 & 60\% & 41.3\% \\
w/o. C2 & 80\% & 45.0\% \\
\midrule
\textbf{PACT} & \textbf{90\%} & \textbf{32.5\%} \\
\bottomrule
\end{tabular}
}
\vspace{-1.0em}
\end{wraptable}

We conduct ablations on the \textit{Insertion} task. 
Here, C1 denotes the critic-side correction module, including progress-based segment localization with Q-value correction, and C2 denotes the actor-side preference loss. 
As shown in Table~\ref{tab:ablation}, removing C1 causes the largest success-rate drop, while removing C2 mainly increases the intervention rate. 
The full PACT achieves the best result, suggesting that critic-side correction and actor-side preference learning are complementary.

\section{Conclusion}
In this paper, we propose a Preference-Calibrated HIL-RL method that treats intervention trajectories as structured heterogeneous data. By localizing suboptimal segments, correcting inflated Bellman targets with intervention-induced preferences, and aligning the actor toward human preference actions, PACT improves success rates, reduces human intervention, and accelerates training across five real-robot manipulation tasks. Further analyses show that the progress model, critic correction, and actor-side preference objective contribute complementary benefits for sample-efficient real-world robot learning.

\section{Limitation}
While PACT substantially improves sample efficiency in real-world HIL-RL, it does not fully resolve credit assignment under all manipulation conditions. To keep the method lightweight and annotation-free for real-robot training, the progress model relies on limited demonstration-based self-supervision and may be affected by non-monotonic trajectories or repetitive execution patterns. Since the subsequent critic correction depends on the localized suboptimal segments, such localization errors may further propagate to critic learning. Moreover, the proposed counterfactual advantage provides a directional preference-based correction rather than a precise per-step value target, making it more suitable for suppressing systematic overestimation than for fine-grained value calibration. Future work could incorporate semantic-rich progress models, VLM-assisted progress reasoning, and pretrained progress representations to improve robustness and enable more accurate credit reassignment in complex long-horizon real-robot tasks.

\acknowledgments{This work was supported by  the Singapore National Robotics Programme Research Project DS-RFM M25N4N2009, the National Natural Science Foundation of China 92467107, and the Scientific Research Innovation Capability Support Project for Young Faculty ZYGXQNJSKYCXNLZCXM-I27.}


\bibliography{main}  

\newpage

\appendix
\section*{Appendix}

\section{Video Demo}
A short video presents PACT, the preference-calibrated actor-critic training framework for real-robot manipulation. We first introduce how PACT localizes suboptimal segments and uses intervention-induced preferences to calibrate actor-critic training. Then we show results on five real-robot tasks, with success rate, intervention rate, and training time reported for each task. Finally, we visualize the progress model and critic model to further verify the effectiveness of PACT. Enjoy the demo!

\section{Task Description}
We evaluate PACT on five real-robot manipulation tasks as shown in Fig.~\ref{task} with different levels of difficulty, covering contact-rich interaction, object grasping, object relocation, and multi-stage manipulation. 

\begin{figure}[h]\centering
    \includegraphics[width=11cm]{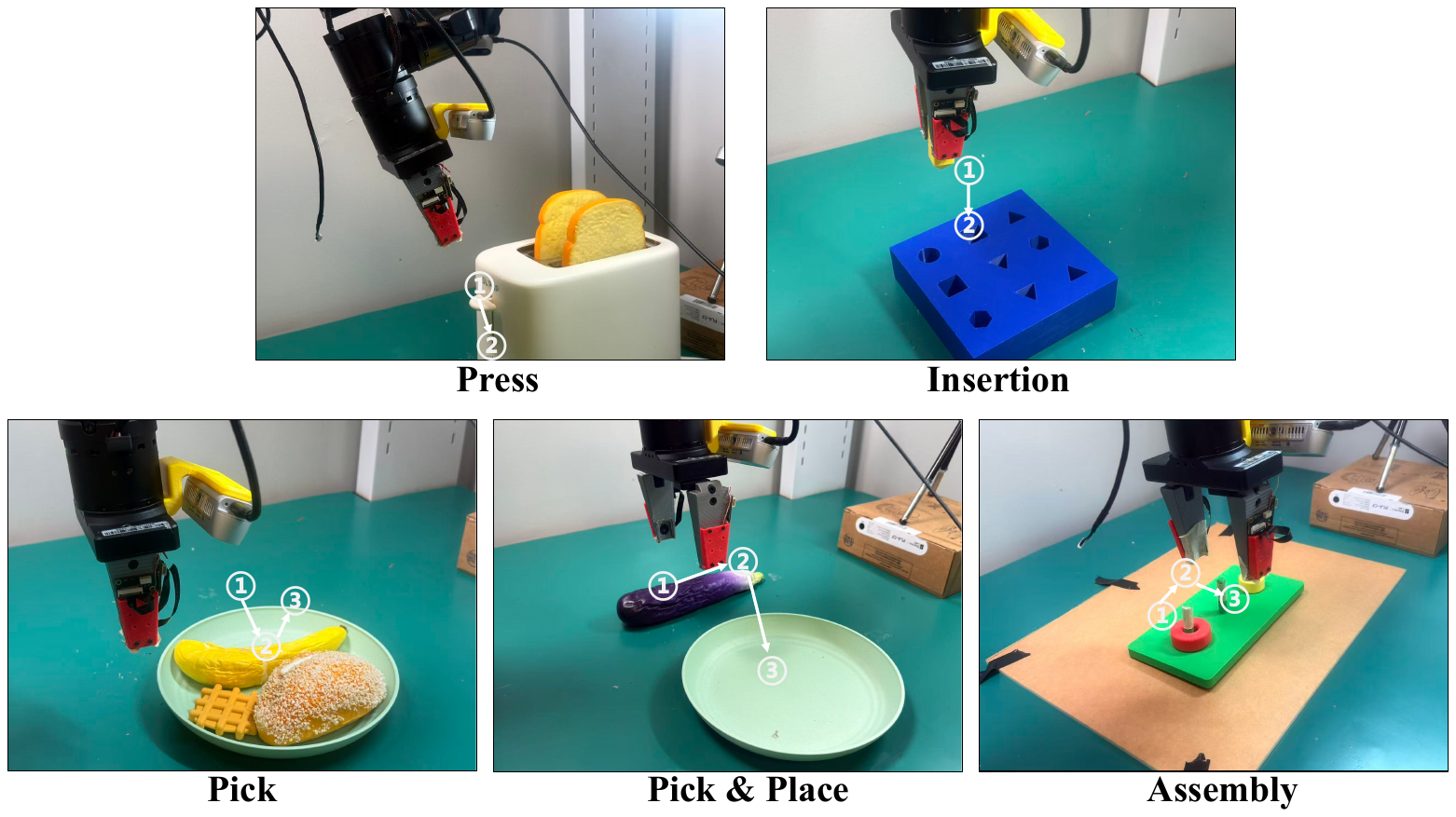}
    \caption{Real-world manipulation tasks used in our experiments. From left to right and top to bottom: Press, Insertion, Pick, Pick \& Place, and Assembly.}
    \label{task}
\end{figure} 

\paragraph{Press (Easy).}
A toaster is placed in the workspace. The robot must move its end effector to the target handle and press it down to trigger the mechanism. Success is achieved when the handle is fully pressed without missing the target or causing invalid contact.

\paragraph{Insertion.}
A shape-sorting board with several target holes is placed on the tabletop. The robot must align the block with the corresponding slot and insert it into the board. Success requires the block to be correctly inserted into the target hole without being stuck or dropped.

\paragraph{Pick.}
Several food-shaped objects are placed on a plate, and the robot is required to identify and grasp the banana. This task evaluates object-specific grasping in the presence of distractors. Success is achieved when the banana is securely grasped and lifted from the plate without disturbing other objects excessively.

\textbf{Pick \& Place.} An eggplant and a target plate are placed in the workspace. The robot must grasp the eggplant, transport it to the plate, and release it inside the target region. Success requires the object to be stably placed on the plate after release.

\textbf{Assembly (Hard).} A fixture board with multiple vertical pegs is placed in the workspace, and a red component is initially mounted on the third peg. The robot must approach the red component, lift it upward from the third peg, move it to the middle second peg, and insert it onto the target peg. Success is achieved when the red component is correctly transferred from the third peg to the second peg and remains stably mounted after execution.

\section{Implementation Details}
This section provides implementation details of the baseline methods and PACT. Unless otherwise specified, PACT follows the same actor-learner training pipeline, network backbone, replay sampling strategy, and optimization hyperparameters as the HIL-SERL baseline. The main differences lie in the progress-based segment localization, preference-based critic correction, and actor-side preference learning.
\subsection{Real-Robot Setup}
All experiments are conducted on a real tabletop manipulation platform, as shown in Fig.~\ref{robot}. The platform consists of a 6-DoF Galaxea A1X robotic arm equipped with a wrist-mounted Intel RealSense D435i camera. We additionally use a side-view Intel RealSense D435i camera to provide an external observation of the workspace. Human operators monitor the robot execution and provide online interventions through a 3D space mouse.
\begin{figure}[h]\centering
    \includegraphics[width=10cm]{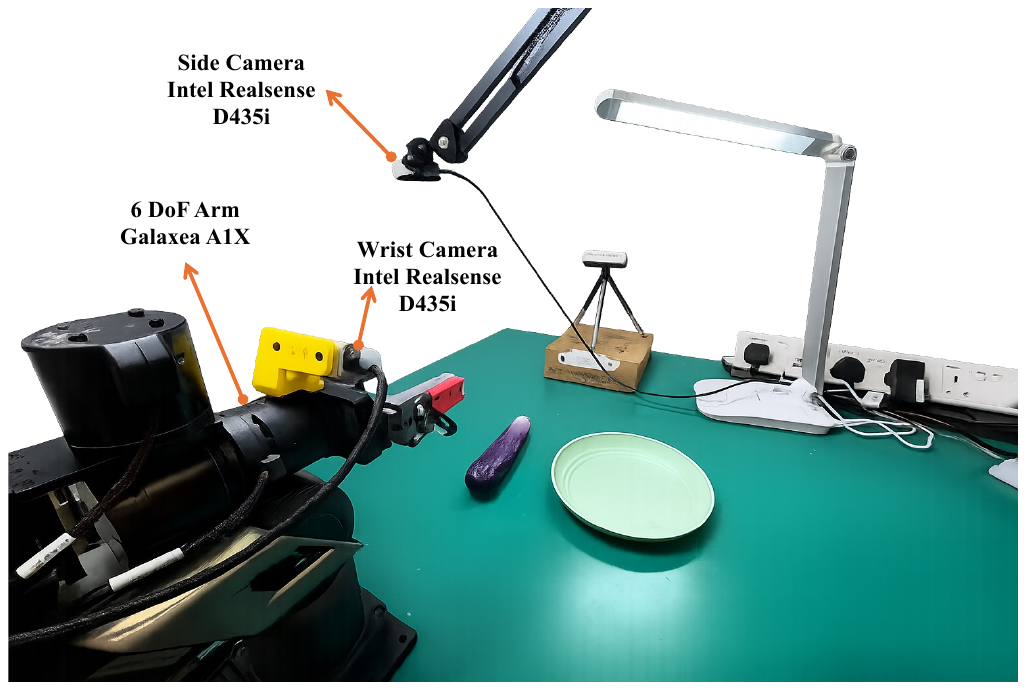}
    \caption{Real-world robot setup. We employ the Galaxea A1X as our real-world reinforcement learning platform.}
    \label{robot}
\end{figure}

\subsection{Training Settings of Baseline Methods}

\paragraph{Training framework.}
We build our implementation on the asynchronous actor-learner training pipeline used in HIL-SERL. The actor interacts with the real robot and stores online transitions into the replay buffer, while the learner samples mini-batches from both the online replay buffer and the demonstration buffer to update the networks. The updated learner parameters are periodically synchronized to the actor.

\paragraph{Observation and action space.}
For all real-robot tasks, the policy receives two visual observations and one proprioceptive state. The visual observations include a wrist-camera image and a side-camera image. The proprioceptive input contains the end-effector pose and gripper state. The environment action has seven dimensions, where the first six dimensions correspond to continuous end-effector control, and the last dimension corresponds to the gripper command. In the hybrid single-arm setting, the continuous actor outputs only the six-dimensional end-effector action, while a separate grasp critic handles the gripper.

\paragraph{Network architecture.}
The visual encoder uses a frozen pretrained ResNet-10 backbone with learned spatial pooling. Each camera image is encoded separately, and the two visual features are concatenated. The proprioceptive state is projected into a 64-dimensional feature and then concatenated with the visual feature to form the final observation latent. The actor is a tanh-squashed Gaussian policy with a two-layer MLP of hidden sizes $[256, 256]$. The continuous critic is a double-Q ensemble with the same hidden sizes. Both actor and critic use tanh activations and LayerNorm. In learned-gripper tasks, including \textit{Pick}, \textit{Pick \& Place}, and \textit{Assembly}, a separate grasp critic predicts scores for three discrete gripper actions. The SAC temperature is initialized to $10^{-2}$ and learned during training.

\paragraph{Demonstration and replay sampling.}
Demonstrations are not used to initialize the online replay buffer. Instead, they are loaded into a separate demonstration buffer before training. The online replay buffer starts empty and is filled by real-robot interaction. During learner updates, each mini-batch is formed by sampling half of the transitions from the online replay buffer and half from the demonstration buffer. In our experiments, each task uses 20 human demonstrations.

\subsection{Implementation Details of PACT}
\label{app}

PACT keeps the baseline actor-critic architecture and training protocol unchanged, and only modifies the training-time data annotation and loss computation. Specifically, PACT adds a progress model for suboptimal segment localization, a preference buffer for storing intervention-induced preference pairs, a critic-side target correction term, and an actor-side preference loss.

\paragraph{Progress Model Training.}
The progress model is trained offline from the 20 demonstrations before online RL.
It uses frozen ResNet-18 backbones producing 512-dim features each, and a 64-dim linear encoder for the 7-dim state difference $q_t - q_0$.
The three embeddings are concatenated and fed into a two-layer MLP head with hidden dimensions of 128 and 64, each followed by LayerNorm and ReLU, and a final Sigmoid activation to produce a scalar progress estimate $\hat{p}_t \in [0,1]$.
 
Training uses two self-supervised objectives with a combined loss $\mathcal{L} = \lambda_\text{prog}\mathcal{L}_\text{prog} + \lambda_\text{mc}\mathcal{L}_\text{mc}$, where $\mathcal{L}_\text{prog}$ regresses against the linear pseudo-label $p_t = t/T$, $\mathcal{L}_\text{mc}$ regresses against Monte Carlo returns computed with $\gamma=0.98$ under a sparse terminal reward, and is applied only over the terminal segment $\mathcal{T}_\text{end}$ to strengthen supervision near task completion. Default weights are $\lambda_\text{prog}=1.0$, $\lambda_\text{mc}=0.1$.

\paragraph{Suboptimal Segment Localization.}
At the end of each intervention-containing episode, the actor runs the progress model over the full trajectory and detects anomalies in the predicted progress curve.
A regression anomaly is flagged when the predicted progress drops by more than $\delta_\text{reg}=0.045$ within a future window of $W_{reg}=4$ steps.
Recovery is declared after $K=3$ consecutive progress increases.
For each intervention point $t_h$, the nearest preceding anomaly start is taken as the suboptimal segment anchor $t_a$, and the segment right boundary $t_r$ is set to the first timestep at which the predicted progress resumes monotonic increase. If no recovery is found before $t_h$, we set $t_r = t_h - 1$.
The resulting segment $[t_a, t_r]$ is annotated into the replay buffer via per-transition correction weight $\omega(t)$.

\paragraph{Preference Pair Construction and Calibration.}
At the first step of each human intervention, the actor records the current observation $s_{t_h}$, the human corrective action $a^h$, and the policy action $a^\pi$ resampled at the handover point.
These three fields are written as a preference pair into a dedicated preference buffer.
Only the first step of a continuous takeover is stored, to prevent a single intervention event from generating redundant pairs.

Each replay transition in $[t_a, t_r]$ is matched to its corresponding intervention state via a \texttt{segment\_id} field stored in the replay buffer, from which $A_\text{cf}$ is retrieved and applied with position-aware decay rate $\beta = 3.0$.
For the actor-side preference loss, the preference objective is imposed directly on the squashed mean action $\bar{a}_\theta(s) = \tanh(\mu_\theta(s))$ rather than on log-probabilities, so that the loss remains well-conditioned as policy variance shrinks during training.
The preference batch size is set to 16 in our implementation.

\paragraph{Hyperparameters.}
Table~\ref{tab:hyperparameters} summarizes all key hyperparameters.
 
\begin{table}[h]
\centering
\caption{Hyperparameters used in all experiments.}
\label{tab:hyperparameters}
\small
\begin{tabular}{lc}
\toprule
\textbf{Hyperparameter} & \textbf{Value} \\
\midrule
\multicolumn{2}{l}{\textbf{\textit{Shared between PACT and HIL-SERL}}} \\
Discount factor $\gamma$ & 0.98 \\
Target network update rate $\tau$ & 0.005 \\
Batch size & 256 (128 replay + 128 demo) \\
Actor / Critic / Progress model learning rate & $3\times10^{-4}$ \\
SAC temperature initialization & $10^{-2}$ \\
Critic ensemble size & 2 \\
Actor / Critic hidden size & 256 $\times$ 2 layers \\
Proprioceptive encoder output dim & 64 \\
Image resolution & $256\times256$ \\
Demonstrations per task & 20 \\
Episode horizon & 400 steps \\
Control frequency & 500 Hz \\
Initial random transitions before training & 100 transitions \\
Critic-to-actor update ratio & 2 \\
Parameter sync interval & 50 learner steps \\
Max training steps & $1\times10^{6}$ \\
\midrule
\multicolumn{2}{l}{\textbf{\textit{PACT-specific}}} \\
Progress model loss weights $(\lambda_\text{prog}, \lambda_\text{mc})$ & $(1.0,\ 0.1)$ \\
Terminal segment length $\|\mathcal{T}_\text{end}\|$ 
&
10\% of the episode\\
Regression window $W_{\mathrm{reg}}$ & 4 steps \\
Regression threshold $\delta_\text{reg}$ & 0.045 \\
Recovery length $K$ & 3 steps \\
Fallback window $W_{\mathrm{fb}}$ & 5 steps \\
Correction decay rate $\beta$ & 3.0 \\
Preference batch size & 16 \\
Preference loss weight $\lambda_\text{pref}$ & 0.2 \\
\bottomrule
\end{tabular}
\end{table}

\section{Theoretical Analysis}
\label{app:theory}

We analysis the stability of mean-space preference constraints to justify the design choice of imposing the preference objective on the squashed mean action $\bar{a}_\theta(s) = \tanh(\mu_\theta(s))$ rather than on log-probabilities, by showing that the log-probability formulation has unbounded worst-case gradients as policy variance decreases, whereas the mean-space formulation remains well-conditioned throughout training.

\paragraph{Setup.}
Let the policy be a $D$-dimensional tanh-squashed Gaussian, where the $d$-th pre-squash variable satisfies $u_d \sim \mathcal{N}(\mu_d, \sigma_d^2)$ with squashed action $a_d = \tanh(u_d)$.
We assume actions are clipped to $(-1+\eta, 1-\eta)$ for some small $\eta > 0$, ensuring $\tanh^{-1}(a_d)$ is well-defined.
The log-probability takes the form:
\begin{equation}
  \log\pi(a_d|s) = -\frac{(\tanh^{-1}(a_d) - \mu_d)^2}{2\sigma_d^2} - \log\sigma_d - \log(1-a_d^2) + \text{const},
\end{equation}
revealing an explicit $O(\sigma_d^{-2})$ dependence on the policy variance.
In entropy-regularized actor-critic training, the learned policy variance may become small as training converges, making log-probability-based preference losses potentially unstable, as formalized below.

\begin{proposition}[Worst-case instability of log-probability preference]
\label{prop:gradient}
Let $u_d^h = \tanh^{-1}(a_d^h)$, $u_d^\pi = \tanh^{-1}(a_d^\pi)$, and $\delta_d = u_d^h - u_d^\pi \neq 0$.
The gradient of $\mathcal{L}_\text{log} = -\log\sigma(\log\pi(a^h|s) - \log\pi(a^\pi|s))$ with respect to $\mu_d$ is not uniformly bounded as $\sigma_d \to 0$.
In particular, at $\mu_d = (u_d^h + u_d^\pi)/2$:
\begin{equation}
  \left|\frac{\partial \mathcal{L}_\text{log}}{\partial \mu_d}\right| = \frac{|\delta_d|}{2\sigma_d^2} \to \infty \quad \text{as } \sigma_d \to 0.
\end{equation}
By contrast, the mean-space preference loss
\begin{equation}
  \mathcal{L}_\text{pref} = \|\bar{a}_\theta - a^h\|^2 + \frac{1}{\|\bar{a}_\theta - a^\pi\|^2 + \epsilon},
\end{equation}
where $a^h$ and $a^\pi$ are fixed actions from the preference buffer with no gradient backpropagation, satisfies the per-dimension bound:
\begin{equation}
  \left|\frac{\partial \mathcal{L}_\text{pref}}{\partial \mu_d}\right| \leq 4 + \frac{4}{\epsilon^2} \quad \forall\, \mu_d \in \mathbb{R},\; \sigma_d > 0,
\end{equation}
and the vector gradient norm satisfies $\|\nabla_\mu \mathcal{L}_\text{pref}\|_2 \leq \sqrt{D}(4 + 4/\epsilon^2)$, independent of $\sigma_d$.
\end{proposition}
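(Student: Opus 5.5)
The plan is to handle the two halves of Proposition~\ref{prop:gradient} separately, using only the chain rule and elementary bounds.

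\textbf{Log-probability half.} Write $\Delta = \log\pi(a^h|s) - \log\pi(a^\pi|s)$. From the displayed form of $\log\pi(a_d|s)$, only the quadratic term depends on $\mu_d$, and $\partial_{\mu_d}\log\pi(a_d|s) = (u_d - \mu_d)/\sigma_d^2$. Hence
\[
\partial_{\mu_d}\Delta = \frac{(u_d^h-\mu_d)-(u_d^\pi-\mu_d)}{\sigma_d^2} = \frac{\delta_d}{\sigma_d^2},
\]
independently of $\mu_d$. Since $\frac{d}{d\Delta}\bigl[-\log\sigma(\Delta)\bigr] = -\sigma(-\Delta)$, the chain rule gives
\[
\left|\frac{\partial \mathcal{L}_\text{log}}{\partial \mu_d}\right| = \sigma(-\Delta)\,\frac{|\delta_d|}{\sigma_d^2}.
\]

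Now evaluate at $\mu_d=(u_d^h+u_d^\pi)/2$. There the two quadratic terms in dimension $d$ are both $\delta_d^2/(8\sigma_d^2)$ and cancel, and the $-\log\sigma_d$ terms cancel as well. So $\Delta$ no longer depends on $\sigma_d$: it reduces to the Jacobian difference $\log(1-(a^\pi_d)^2)-\log(1-(a^h_d)^2)$ plus the contributions from the other dimensions. Consequently $\sigma(-\Delta)=c$ is a fixed constant in $(0,1)$, and the gradient equals $c|\delta_d|/\sigma_d^2\to\infty$. This already proves that the gradient is not uniformly bounded.

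The exact constant $c=1/2$ stated in the proposition needs $\Delta=0$ at this point. That holds when the Jacobian terms and the other dimensions balance, for instance when $|a_d^h|=|a_d^\pi|$ and the remaining dimensions are also taken at their midpoints, or when the Jacobian is absorbed into the constant. I expect this bookkeeping to be the main obstacle. I would state the midpoint identity under that normalisation, and note that the divergence conclusion holds regardless, since only the positivity of $c$ is needed.

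\textbf{Mean-space half.} Let $\bar a=\tanh(\mu)$ and $r=\|\bar a - a^\pi\|^2\ge 0$. Differentiating with respect to $\bar a_d$ gives
\[
\frac{\partial\mathcal{L}_\text{pref}}{\partial \bar a_d} = 2(\bar a_d - a^h_d) - \frac{2(\bar a_d-a^\pi_d)}{(r+\epsilon)^2}.
\]
All actions lie in $[-1,1]$, so $|\bar a_d - a^h_d|\le 2$ and $|\bar a_d - a^\pi_d|\le 2$. Since $r\ge 0$, the denominator satisfies $(r+\epsilon)^2\ge\epsilon^2$. The two terms are therefore bounded by $4$ and $4/\epsilon^2$ respectively.

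Next apply the chain rule through the squashing. Because $\partial\bar a_d/\partial\mu_d = 1-\tanh^2(\mu_d)\in(0,1]$ and $\bar a_d$ depends only on $\mu_d$, we get $|\partial_{\mu_d}\mathcal{L}_\text{pref}|\le 4+4/\epsilon^2$ for every $\mu_d\in\mathbb{R}$. The parameter $\sigma_d$ never appears, since the loss uses only the mean and $a^h,a^\pi$ are stop-gradient constants.

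Finally, bounding each of the $D$ components by $4+4/\epsilon^2$ gives
\[
\|\nabla_\mu\mathcal{L}_\text{pref}\|_2 \le \sqrt{D}\,\bigl(4+4/\epsilon^2\bigr).
\]
This part is routine.
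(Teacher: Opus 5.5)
Your proposal is correct and follows the paper's proof: the same chain-rule computation $\partial_{\mu_d}\mathcal{L}_\text{log} = -\sigma(-\Delta)\,\delta_d/\sigma_d^2$ evaluated at the midpoint, and the same termwise bound for $\mathcal{L}_\text{pref}$ using $|\tanh(\mu_d)-a_d|\le 2$, $\mathrm{sech}^2(\mu_d)\le 1$ and $(r+\epsilon)^2\ge\epsilon^2$. Your treatment of the midpoint is more careful than the paper's. The paper asserts $\Delta=0$ there by silently dropping the Jacobian terms $\log(1-a_d^2)$ and the other coordinates, whereas you correctly note that divergence only requires $\sigma(-\Delta)\in(0,1)$ to be independent of $\sigma_d$. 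One correction to your sample condition for $\Delta=0$: it should require $|a^h_{d'}|=|a^\pi_{d'}|$ in every coordinate, because the other coordinates' Jacobian terms survive even at their midpoints.
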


\begin{proof}
At $\mu_d = (u_d^h + u_d^\pi)/2$, the log-probability difference reduces to:
\begin{equation}
  \log\pi(a_d^h|s) - \log\pi(a_d^\pi|s)
  = \frac{\delta_d(2\mu_d - u_d^h - u_d^\pi)}{2\sigma_d^2} = 0.
\end{equation}
Therefore $1 - \sigma(0) = 1/2$, and the gradient is:
\begin{equation}
  \left|\frac{\partial \mathcal{L}_\text{log}}{\partial \mu_d}\right|
  = \frac{1}{2} \cdot \frac{|\delta_d|}{\sigma_d^2} \to \infty \quad \text{as } \sigma_d \to 0,
\end{equation}
establishing that the gradient is not uniformly bounded.

For the mean-space preference loss, the gradient with respect to $\mu_d$ is:
\begin{equation}
  \frac{\partial \mathcal{L}_\text{pref}}{\partial \mu_d}
  = \left(2(\tanh(\mu_d) - a_d^h)
  - \frac{2(\tanh(\mu_d) - a_d^\pi)}{(\|\bar{a}_\theta - a^\pi\|^2 + \epsilon)^2}\right)\mathrm{sech}^2(\mu_d).
\end{equation}
Applying the triangle inequality and bounding each factor using $|\tanh(\mu_d) - a_d^h| \leq 2$, $|\tanh(\mu_d) - a_d^\pi| \leq 2$, $\mathrm{sech}^2(\mu_d) \leq 1$, and $(\|\bar{a}_\theta - a^\pi\|^2 + \epsilon)^2 \geq \epsilon^2$:
\begin{equation}
  \left|\frac{\partial \mathcal{L}_\text{pref}}{\partial \mu_d}\right|
  \leq 4 + \frac{4}{\epsilon^2},
\end{equation}
uniformly over all $\mu_d \in \mathbb{R}$ and $\sigma_d > 0$.
Summing over $D$ dimensions and applying the Cauchy-Schwarz inequality yields $\|\nabla_\mu \mathcal{L}_\text{pref}\|_2 \leq \sqrt{D}(4 + 4/\epsilon^2)$.
\end{proof}

\section{Additional Experimental Results}

\subsection{Q-Value Correction Analysis}
\begin{figure}[!t]\centering
    \includegraphics[width=12cm]{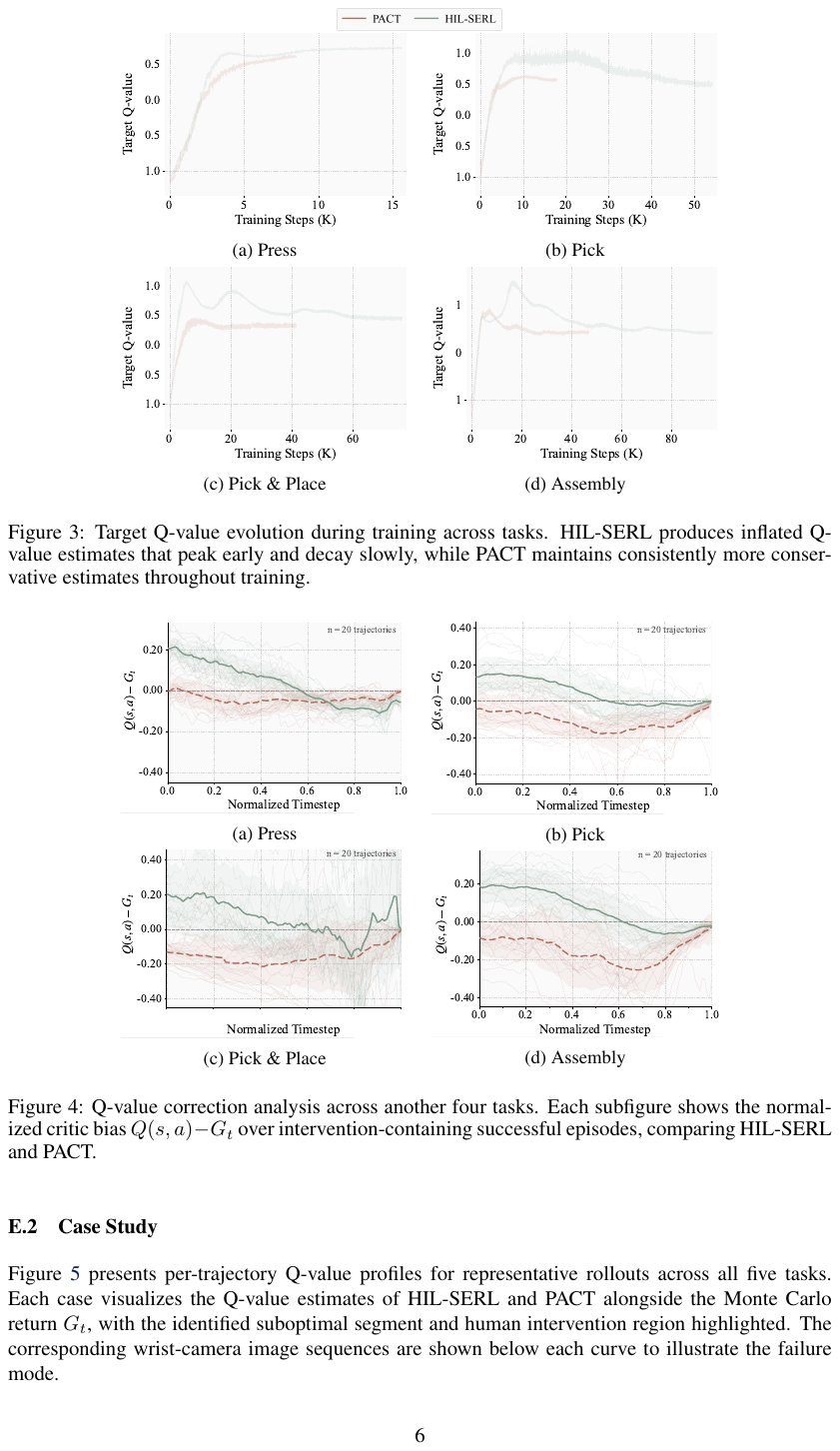}
    \caption{Target Q-value evolution during training across tasks. HIL-SERL produces inflated Q-value estimates that peak early and decay slowly, while PACT maintains consistently more conservative estimates throughout training.}
    \label{fig:target_q}
\end{figure} 

\begin{figure}[!h]\centering
    \includegraphics[width=11cm]{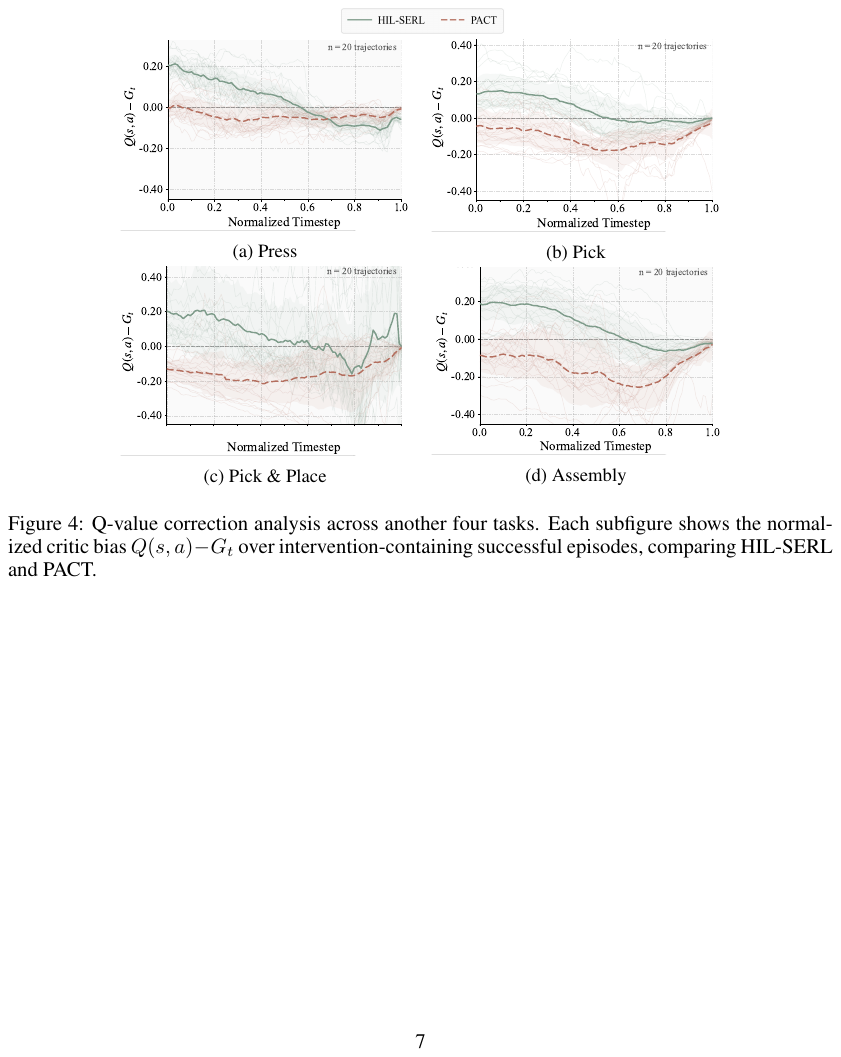}
    \caption{Q-value correction analysis across another four tasks. Each subfigure shows the normalized critic bias $Q(s,a) - G_t$ over intervention-containing successful episodes, comparing HIL-SERL and PACT.}
    \label{app:qvalue}
\end{figure} 

Figure~\ref{fig:target_q} extends the target Q-value evolution analysis of Section~4.4 to the remaining tasks.
In all cases, HIL-SERL produces rapidly increasing target values in the early stages of training, reflecting the propagation of terminal rewards through suboptimal pre-intervention segments via uniform TD backup.
In contrast, PACT consistently maintains more conservative Q-value estimates throughout training.
On more complex tasks, the gap is particularly pronounced, with HIL-SERL exhibiting sharp early peaks followed by slow decay, while PACT converges to a stable and lower estimate.
On simpler tasks, the two methods show closer trajectories, though PACT still maintains a consistently lower target Q-value.

To further verify that this suppression persists at convergence, Figure~\ref{app:qvalue} reports the normalized critic bias $Q(s,a) - G_t$ over intervention-containing successful episodes at the end of training, where $G_t = \sum_{k=0}^{T-t-1}\gamma^k r_{t+k}$ is the Monte Carlo return used as a diagnostic reference.
Across all tasks, HIL-SERL exhibits a consistently positive bias in the early and middle phases of each trajectory, while PACT suppresses this overestimation throughout, maintaining critic bias close to or below zero.
Together, the two analyses confirm that preference-aware segment-level correction effectively reduces Q-value inflation both during training and at convergence, generalizing the finding of Section~4.4 across tasks of varying difficulty.

\subsection{Case Study}
Figure~\ref{case_study} presents per-trajectory Q-value profiles for representative rollouts across all five tasks. Each case visualizes the Q-value estimates of HIL-SERL and PACT alongside the Monte Carlo return $G_t$, with the identified suboptimal segment and human intervention region highlighted.
The corresponding wrist-camera image sequences are shown below to illustrate the failure mode.
 
Across diverse failure patterns, a consistent pattern emerges: within the suboptimal segment, HIL-SERL assigns Q-values that substantially exceed the MC return.
In contrast, PACT suppresses the Q-value estimate within the identified segment, bringing it closer to or below the MC return. These single-trajectory cases complement the aggregate statistics of Section~4.4 by showing that PACT's credit correction operates at the level of individual suboptimal behaviors, correctly identifying and penalizing degraded execution regardless of failure type or task structure.
\begin{figure}[b]\centering
    \includegraphics[width=12cm]{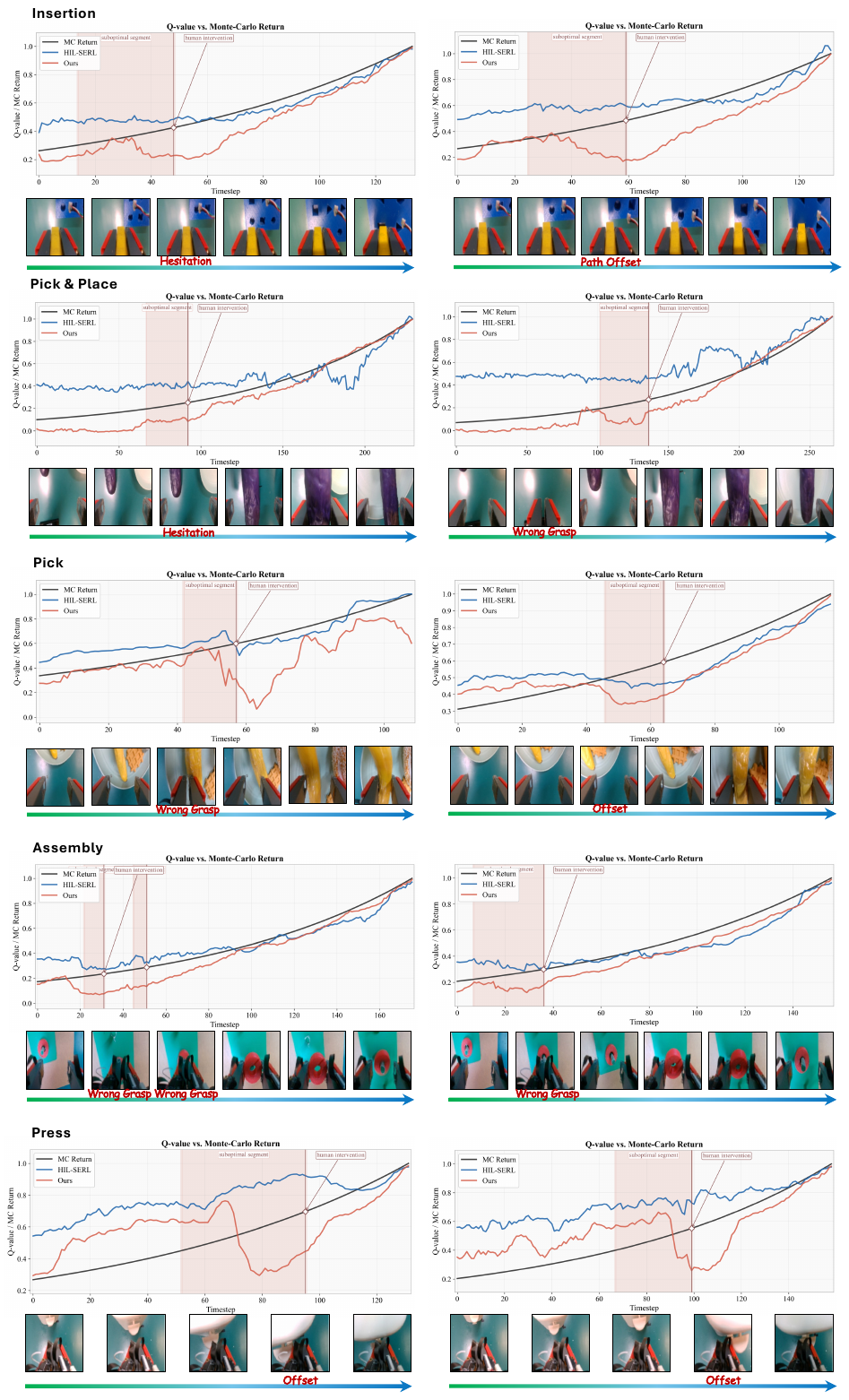}
    \caption{Per-trajectory Q-value profiles across all five tasks. Each case shows the Q-value estimates of HIL-SERL and PACT alongside the Monte Carlo return $G_t$, with the suboptimal segment (pink) and human intervention region (gray) highlighted. Wrist-camera image sequences below each curve illustrate the corresponding failure mode. Within the suboptimal segment, HIL-SERL consistently overestimates Q-values relative to $G_t$, while PACT assigns more conservative estimates that are closer to the Monte Carlo diagnostic reference.}
    \label{case_study}
\end{figure} 

\section{Algorithm Workflow for PACT}
PACT begins with an offline phase that collects human demonstrations to train the progress model $f_\psi$ via self-supervised learning. During online training, the \textit{actor process} continuously interacts with the environment under HIL supervision: during online training stage, the actor process interacts with the environment under HIL supervision and stores online transitions into the replay buffer. After each intervention-containing episode, PACT runs $f_\psi$ over the full trajectory, localizes suboptimal segments $[t_a,t_r]$ for each intervention point $t_h$, and stores the corresponding preference pairs $(a^{\mathrm{human}}, a^\pi)$ into $\mathcal{D}_{\mathrm{pref}}$. Concurrently, the \textit{learner process} samples from the replay buffer and applies preference-calibrated updates to both the critic, via position-weighted Bellman target correction, and the actor via  $\mathcal{L}_{\mathrm{pref}}$. Algorithm~\ref{alg:pact} summarizes the complete training pipeline.

\begin{algorithm}[h]
\caption{PACT: Preference-Calibrated Actor-Critic Training}
\label{alg:pact}
\begin{algorithmic}[1]
\REQUIRE Policy $\pi_\theta$, critic $Q_\phi$, replay buffer $\mathcal{D}_{replay}$, demo buffer $\mathcal{D}_{\mathrm{demo}}$
\STATE Collect human demonstration trajectories, store in $\mathcal{D}_{\mathrm{demo}}$ 
\STATE Train progress model $f_\psi$ on $\mathcal{D}_{\mathrm{demo}}$ via $\mathcal{L}_{\mathrm{progress}}$ 
\STATE \textbf{// Actor Process}
\WHILE{not converged}
    \STATE Collect one episode under policy execution or human intervention and store it in $\mathcal{D}_{\mathrm{replay}}$ 
    \IF{the episode contains intervention}
        \STATE Run $f_\psi$ over the full trajectory to localize $[t_a,t_r]$ for each intervention point $t_h$
        \STATE Mark $[t_a,t_r]$ in $\mathcal{D}_{\mathrm{replay}}$ and store $(a^{\mathrm{human}}, a^\pi, s_{t_h})$ in $\mathcal{D}_{\mathrm{pref}}$ 
    \ENDIF
\ENDWHILE
\STATE \textbf{// Learner Process}
\WHILE{not converged}
    \STATE Sample minibatch from $\mathcal{D}_{replay}$  and $\mathcal{D}_{\mathrm{demo}}$
    \STATE Compute counterfactual advantage $\mathcal{A}_{\mathrm{cf}}$ from $\mathcal{D}_{\mathrm{pref}}$ 
    \STATE Correct Bellman targets on $[t_a, t_r]$ with position-weighted $\mathcal{A}_{\mathrm{cf}}$ 
    \STATE Update critic $Q_\phi$ via corrected Bellman target $\mathcal{L}_{\mathrm{critic}}$ 
    \STATE Compute preference auxiliary loss $\mathcal{L}_{\mathrm{pref}}$ in bounded mean space 
    \STATE Update actor $\pi_\theta$ via $\mathcal{L}_{\mathrm{RLPD}} + \lambda_{\mathrm{pref}} \cdot \mathcal{L}_{\mathrm{pref}}$ 
    \STATE Soft-update target critic $\bar{\phi}$ 
\ENDWHILE
\end{algorithmic}
\end{algorithm}

\end{document}